\title{Where Common Knowledge Cannot Be Formed, Common Belief Can -- Planning with Multi-Agent Belief Using Group Justified Perspectives (with Supplementary Material)}
\author{
    Guang Hu\textsuperscript{\rm 1},
    Tim Miller\textsuperscript{\rm 2},
    Nir Lipovetzky\textsuperscript{\rm 1},\\
}
\setlist[enumerate]{after=\vspace{-0.5\baselineskip}}
\newtheoremstyle{normalfont}
  {3pt} 
  {3pt} 
  {\normalfont} 
  {} 
  {\bfseries} 
  {.} 
  { } 
  {} 
\theoremstyle{normalfont}
\newtheorem{example}{Example}
\newtheorem{definition}{Definition}
\newtheorem{theorem}{Theorem}
\newtheorem{plan}{Plan}
\def\L{\mathcal{L}}
\def\T{T}
\def\unknown{\frac{1}{2}}
\newcommand{\tm}[1]{\todo[author=Tim,inline,color=green]{#1}}
\definecolor{formalshade}{rgb}{0.95,0.95,1}
\definecolor{darkblue}{rgb}{0.0,0.0,0.5} 
\definecolor{formalshade}{rgb}{0.95,0.95,1}
\definecolor{darkblue}{rgb}{0.0,0.0,0.5}
\newtcolorbox{myquote}[1][]{%
  colback=formalshade, colframe=darkblue,
  left=4pt, right=4pt, top=4pt, bottom=4pt,
  boxrule=0.5pt,
  sharp corners,
  width=\columnwidth,
  enhanced,
  breakable,
  #1
}
\newcommand{\defmathend}{\tag*{\text{$\blacksquare$}}}
\newcommand{\defnormalend}{\hfill$\blacksquare$}
\newcommand{\relationbasecase}{r(V_r)}
\newcommand{\relationset}{\mathcal{R}}
\newcommand{\pwpdomains}[0]{ \mathbb{D}}
\newcommand{\signature}{\Sigma \!\!=\!\! (Agt, V, \pwpdomains, \relationset)}
\newcommand{\jpmodel}{M=(Agt, V,\pwpdomains,\pi,\observation_1,\dots,\observation_k)}
\newcommand{\override}[1]{\langle #1 \rangle}
\newcommand{\dom}{\textrm{dom}}
\newcommand{\timestamp}{ts}
\newcommand{\seq}{\Vec{s}}
\newcommand{\bool}{\{true,false\}}
\newcommand{\statespace}{\mathcal{S}}
\newcommand{\seqspace}{\vec{\mathcal{S}}}
\newcommand{\statespacecomplete}{\mathcal{S}_{c}}
\newcommand{\f}{\mathit{f}}
\newcommand{\observation}{\mathit{O}}
\newcommand{\oldobservation}{\mathit{O}}
\newcommand{\memorization}{\mathit{R}}
\newcommand{\none}{\!\perp}
\newcommand{\assign}{\!=\!}
\newcommand{\cc}{\mathit{c}}
\newcommand{\es}{\mathit{ES}}
\newcommand{\ds}{\mathit{DS}}
\newcommand{\cs}{\mathit{CS}}
\newcommand{\ek}{\mathit{EK}}
\newcommand{\dk}{\mathit{DK}}
\newcommand{\ck}{\mathit{CK}}
\newcommand{\eb}{\mathit{EB}}
\newcommand{\db}{\mathit{DB}}
\newcommand{\cb}{\mathit{CB}}
\newcommand{\ef}{\mathit{ef}\!}
\newcommand{\df}{\mathit{df}\!}
\newcommand{\cf}{\mathit{cf}\!}
\begin{document}

\maketitle

\begin{abstract}
Epistemic planning is the sub-field of AI planning that focuses on changing knowledge and belief. 
It is important in multi-agent domains where agents need to have knowledge/belief regarding the environment, but also the beliefs of other agents, including nested beliefs.
When modeling knowledge in multi-agent settings, many models face an exponential growth challenge in terms of nested depth.
A contemporary method, known as \emph{Planning with Perspectives}~(PWP), addresses these challenges through the use of perspectives and set operations for knowledge.
Furthermore, the \emph{Justified Perspective}~(JP) model defines that an agent's belief is \emph{justified} if and only if the agent has seen evidence that this belief was true in the past and has not seen evidence to suggest that this has changed.

The current paper extends the JP model to handle \emph{group belief}, including distributed belief and common belief, even mixing knowledge and belief modalities.
We call this the \emph{Group Justified Perspective} (GJP) model.
Using experimental problems crafted by adapting well-known benchmarks to a group setting, we show the efficiency and expressiveness of our GJP model. 
The end result is the only planning tool capable of solving problems involving distributed belief and common belief.

\end{abstract}

\section{Introduction and Motivation}
\label{sec:intro}
Epistemic planning is a sophisticated branch of automated planning that integrates elements from both classical planning and epistemic logic. 
It allows the agents to reason about not only the physical world but also other agents' knowledge and beliefs.
It is suitable for solving multi-agent cooperative or adversarial tasks.

There are two traditional research directions to solving epistemic planning problems:
explicitly maintain all epistemic relations, such as Kripke frames~\cite{DBLP:conf/aips/KominisG15,DBLP:journals/jancl/BolanderA11,DBLP:conf/ecsi/Bolander14};
or require an expensive pre-compilation step to convert an epistemic planning problem into a classical planning problem~\cite{DBLP:journals/ai/MuiseBFMMPS22,DBLP:journals/ai/CooperHMMPR21}.
Research from both directions faces exponential growth in terms of the epistemic formulae depth.

Recently, \citeauthor{DBLP:journals/jair/Hu0L22}~\shortcite{DBLP:journals/jair/Hu0L22} proposed a lazy state-based approach called \emph{Planning with Perspectives} (PWP) that uses F-STRIPS~\cite{geffner2000functional} to reason about the agent's seeing relation and knowledge.
Their intuition is to use \textbf{perspective functions} to model the part of a state that each agent can see, and evaluate epistemic formulae from this. 
In short, an agent knows a proposition if it can see the variables involved in the proposition, and that proposition is true. 
They allow perspective functions to be implemented in F-STRIPS external functions, which means new logics can be created; for example, they model proper epistemic knowledge bases~\cite{DBLP:conf/ecai/LakemeyerL12} and Big Brother logic~\cite{DBLP:conf/atal/GasquetGS14} in continuous domains, with impressive computational results.
\citeauthor{DBLP:conf/aips/Hu0L23}~\shortcite{DBLP:conf/aips/Hu0L23} extended their model to model belief as well as knowledge, permitting e.g.\ conflicting belief between agents.
However, their model could only reason about single agent nested belief, not group belief operators such as common belief. 

In this paper, we extend their work to model uniform belief, distributed belief, and common belief.
We follow the intuition that when people reason about something they cannot see, they generate justified beliefs by retrieving the information they have seen in the past~\cite{goldman1979justified}.

However, applying this intuition of ``belief is past knowledge'' na\"ively to group belief is neither complete nor consistent. 
It is possible to form a common belief about a proposition even if there was no prior common knowledge about this. 
For example, consider agent $a$ looking in a box and seeing a coin with heads, and then agent $b$ looking into the box a minute after agent $a$ and seeing it is heads. 
At no point did they see the coin at the same time, so they cannot form common knowledge that the coin is heads (it may have changed in the minute in between). 
However, they can form a common belief that it is heads because they each saw heads and have no evidence to suggest the value has changed.

We illustrate this idea by extending the false-belief example from \cite{DBLP:conf/aips/Hu0L23}.
\begin{example}
\label{example:number}
    There are two agents $a$ and $b$, and there is a number $n \in \mathbb{N}$ inside a box. 
    The number can only be seen by the agents when they are peeking into the box. 
    The agents know whether the others are peeking into the box.
    The actions that agents can do are: \emph{peek} and \emph{return}.
    They cannot peek into the box at the same time, so they need to return to allow the other agent to peek.
    There are two hidden actions \emph{add} and \emph{subtract} (performed by another hidden agent), and their effects are only visible to the agents who are peeking into the box.
    Initially, both agents $a$ and $b$ are not peeking, and the value of $n$ is $2$.
    The task is to generate a plan such that:
     the common belief between $a$ and $b$ is $n\!\!<\!\!3$.
\end{example}

\begin{figure}[t!h]
    \centering
    \includegraphics[width=0.49\textwidth]{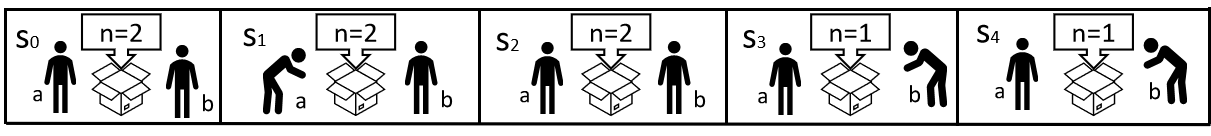}
    \caption{Plan~\ref{plan:1}.}
    \label{fig:plan_1}
\end{figure}

As shown in Figure~\ref{fig:plan_1}, a valid plan to achieve the common group belief above would be:
\begin{plan}
\label{plan:1}
    $peek(a)$, $return(a)$, $subtract$, $peek(b)$ 
\end{plan}

At the end of the plan, agent $a$ does not know if $n \!\assign\! 2$ -- an agent only knows something if they can see it.
\citeauthor{DBLP:conf/aips/Hu0L23}~\shortcite{DBLP:conf/aips/Hu0L23} model individual belief using \emph{justified belief} -- justified belief can be derived from an agent's ``memory''~\cite{goldman1979justified} -- an agent believes something if they knew it earlier and have no evidence that it has changed.

In the above, agents $a$ and $b$ do not peek into the box at the same time. 
So, at no point neither the statement ``agent $a$ knows that agent $b$ knows $n\!\!<\!\!3$'' ($K_a K_b n\!\!<\!\!3$) nor $K_b K_a n\!\!<\!\!3$ holds.
Further, the common knowledge $\ck_{\{a,b\}} n\!\!<\!\!3$ does not hold. 
However, we assert that the common belief $\cb_{\{a,b\}} n\!\!<\!\!3$ should hold if agents have memory. 
Since agent $a$ sees $n \!\assign\! 2$ after step 1 and agent $b$ sees $n \!\assign\! 1$ after step 4, both $B_a n \!\assign\! 2$ and $B_b n \!\assign\! 1$ hold, which implies both $B_a n\!\!<\!\!3$ and $B_b n\!\!<\!\!3$.
In addition, since agent $a$ sees agent $b$ peeking into the box after step 4 and $B_a n \!\assign\! 2$, $B_a B_b n \!\assign\! 2$ should hold.
Similarly, $B_b B_a n \!\assign\! 1$ should hold.
Therefore, we have both $B_a B_b n\!\!<\!\!3$ and $B_b B_a n\!\!<\!\!3$. 
Given that $a$ and $b$ both saw that each other peeked in the box, and saw that each saw that each peeked into the box, etc, both $a$ and $b$ believe each other believes $n\!\!<\!\!3$ with infinite depth.
From the definition by \citeauthor{Fagin:2003:RK:995831}~\shortcite{Fagin:2003:RK:995831}, this constitutes common belief.

In this paper, we propose group perspective functions to reason about uniform belief, distributed belief, and common belief, even mixed with individual and group knowledge operators.
Then, we define the semantics for these newly introduced group belief operators and show the axiomatic system they follow.
We discuss an implementation that extends an existing epistemic planning tool, and report experiments on key domains in epistemic planning.
Our results show that we can efficiently\footnote{Note: we do not have any existing approach to compare to.} and expressively solve interesting problems with group belief, even with a basic blind search algorithm, making this the only planner capable of solving problems of distributed and common belief.
\section{Background}
\label{sec:background}

\subsection{Epistemic planning}
\label{sec:background:epistemic_planning}
In epistemic planning, the most popular approach, Kripke structures~\cite{Fagin:2003:RK:995831}, model belief and knowledge logic using the \emph{possible worlds}.
The core idea is to use a set  of \emph{accessibility relations} $\mathcal{K}_i$ to represent whether agent $i$ can distinguish between two states (possible worlds). An agent knows (believes) a formula $\phi$ if it is true in every world that the agent considers possible.
The difference between knowledge and belief lies in the properties of $\mathcal{K}_i$.
For knowledge, the agent's accessibility relations need to be \emph{Reflexive} and \emph{Transitive} and \emph{Euclidean}, while the Reflexivity (Axiom T) is lost in belief.
Semantically speaking, if $K_i \varphi$ (agent $i$ knows $\varphi$ is true) holds (Axiom T), then $\varphi$ holds; while if $B_i \varphi$ (agent $i$ believes $\varphi$), it is not necessarily the case that $\varphi$ holds. In short: agents can have incorrect beliefs, but not incorrect knowledge. 
For group beliefs, there are mainly three types: uniform beliefs (shared beliefs); distributed beliefs; and common beliefs.

\textbf{Uniform belief}, denoted $\eb_G \varphi$, is straightforward --- it means that everyone in  group $G$ believes proposition $\varphi$. 
There are a number of approaches to model uniform belief~\cite{DBLP:phd/ethos/Iliev13,DBLP:journals/ai/FrenchHIK13}.


\textbf{Distributed belief}, denoted $\db_G \varphi$, combines the beliefs of all agents in group $G$.  
It is, effectively, the pooled beliefs of group $G$ if the agents were to ``communicate'' everything they believe to each other.
Any model has to consider the pooled beliefs from each agent and the pooled beliefs from the group that are not held by any of its individual agents, but are held by the group. 
For example, if agent $a$ believes $x=1$ (and nothing else) and agent $b$ believes $y=1$ (and nothing else), distributively, the group $\{a,b\}$ believes that $x=y$, even though no individual agent believes this.
Distributed belief is challenging because agents can have conflicting beliefs: if agent $a$ believes $x=1$ and agent $b$ believes $x=2$, what should the distributed belief be?

There are two main approaches to model distributed belief: (1) belief merging~\cite{DBLP:conf/kr/Konieczny00,DBLP:journals/logcom/KoniecznyP02,DBLP:conf/atal/EveraereKM15}; 
and (2) merging the agents' epistemic accessibility relations~\cite{DBLP:journals/ai/HalpernM92,DBLP:journals/synthese/WangA13,DBLP:journals/jancl/Roelofsen07,AGOTNES20171,DBLP:conf/dali/Solaki20,DBLP:conf/lori/Li21,DBLP:conf/lics/GoubaultKLR23,VANDERHOEK1999215}. 
Typically, merging conflicting beliefs is solved using some form of ordering over agents or propositions, meaning that some agents (propositions) receive priority over others. 
In this paper, we propose a definition for group justified distributed belief, in which conflicting beliefs are removed entirely, leading to a modal operator that obeys the axiom of consistency (axiom $D$) while dropping Axiom D2 (A group distributedly believes $\varphi$ if its subset distributedly believes $\varphi$) as agents' justified beliefs could be false.
In what is the most closely related work to ours, \citeauthor{DBLP:conf/ecai/HerzigLPRS20}~\shortcite{DBLP:conf/ecai/HerzigLPRS20} combine the two approaches of belief merging and the merging of accessibility relations to define a logic for modeling explicit and implicit distributed beliefs.
Explicit distributed belief is obtained from each agent's individual belief base; while implicit belief is derived from the group's collective belief base. In addition, they also introduce customized belief combination operators to model consistent distributed beliefs. 

\textbf{Common belief}, denoted $\cb_G \varphi$, is defined as: all agents in $G$ believe $\varphi$, all agents in $G$ believe that all agents in $G$ believe $\varphi$, all agents in $G$ believe \ldots, up to an infinite depth of nesting.
The existing work~\cite{Meggle2003,DBLP:conf/lori/Schwarzentruber11,DBLP:journals/mlq/Bonanno96,DBLP:journals/jphil/Heifetz99,DBLP:journals/mlq/BonannoN00} reasons for belief on belief bases or possible worlds.
\citeauthor{DBLP:journals/jair/Hu0L22}~\shortcite{DBLP:journals/jair/Hu0L22} forms the common knowledge of a group by finding the fixed point intersection of all agents' perspectives from the group, showing that this fixed point always exists within a finite bound.
However, their approach cannot handle beliefs.

\subsection{Planning with Perspectives}
\label{sec:background:pwp}

\citeauthor{DBLP:journals/jair/Hu0L22}~\shortcite{DBLP:journals/jair/Hu0L22} proposes a perspective model to lazily evaluate epistemic (knowledge) formulae with external functions, called \emph{Planning with Perspectives} (PWP).
They adapt the seeing operator $S_i$ from \citeauthor{DBLP:conf/ecai/CooperHMMR16}~\shortcite{DBLP:conf/ecai/CooperHMMR16} for individual agent $i$ following the intuition that ``seeing is knowing''; formally, $K_i\varphi \leftrightarrow \varphi \land S_i \varphi$. 
That is, agent $i$ knows $\varphi$ iff $\varphi$ is true and they see the value of $\varphi$.

Firstly, they defined the signature of their PWP model as:
\begin{definition}[PWP Signature]
\label{def:pwp:signature}
A \emph{signature} $\Sigma$ is described by the tuple $\signature$, with $Agt$ being a finite set of agent identifiers (of size $k$), $V$ as a finite set of variables (of size $m$) such that $Agt \subseteq V$ ($k \leq m$), implying agent identifiers serve as variables. 
Furthermore, $\pwpdomains$ denotes the set of all domains, where each $D_v$ corresponds to a possibly infinite domain of constant symbols (including a $\none$ symbol to represent ``None'') for each variable $v \in V$. 
Domains can be discrete or continuous. 
Lastly, $\relationset$ denotes a finite collection of predicate symbols.\defnormalend
\end{definition}

From this, they defined a state as a set of variable assignments $v \assign e$, where $v \in V$ and $e \in D_v$, such that each variable $v$ appears at most once in any state. 
They define $v \in s$ as syntactic sugar for $ \exists e \!\!\in\!\! D_v, v \!\assign\! e \in s$ and $s_{\none}$ as a special state ($s_{\none} \!\assign\! \{v \!\!= \none \mid v \!\!\in\!\! V\}$).
In addition, they denote $\statespace$ as the set of all possible states and $\statespacecomplete$ as the set of complete states ($\forall s_c \in \statespacecomplete, |s_c| \assign |V|$).
They also define a state override function $\override{}$ to update the state as follows:
\begin{definition}[State Override Function]
\label{def:state_override}
    A state override function $\langle \rangle: \statespace \times \statespace \to \statespace$ for a given state $s$ overriding a state $s'$ is defined as:
    \[
     s'\langle s \rangle = s \cup \{ v \assign s'(v) \mid v \in s' \land v \notin s \} \defmathend
    \] 
\end{definition}
That is, merging the two states but prioritizing the value in $s$ if there is a clash. 

The key idea in their approach is to reason about agents' epistemic formulae based on what the agents can observe from their local state.
This is done by defining a \textbf{perspective (observation) function} for each agent $i$ that takes a state and returns a subset of that state, which represents the part of the state that is observable to agent~$i$.

\begin{definition}[Observation Function]
\label{def:observation}
An observation function for agent~$i$, $\observation_i: \statespace \rightarrow \statespace$, is a function that takes a state and returns a subset of that state, representing the part of the state visible to agent~$i$. 
The following properties must hold for an observation function $\observation_i$ for all $i \in Agt$ and $s \in S$:
\begin{equation*}
    \begin{aligned}
            1. &\ \observation_i(s) \subseteq s , & \text{(Contraction)}\\
            2. &\ \observation_i(s) = \observation_i(\observation_i(s)), &  \text{(Idempotence)}\\
            3. &\ \text{ If } s \subseteq s', \text{ then } \observation_i(s) \subseteq \observation_i(s'), &  \text{(Monotonicity)}\\
    \end{aligned}\defmathend
\end{equation*} 
\end{definition}

These properties ensure that the observation function $\oldobservation_i$ is contractive, idempotent, and monotonic.

In addition, they also introduce group observation functions to model group knowledge.
They use the set union operator for distributed knowledge, and a fix point function for common knowledge.
They define a semantics—referred to as the \emph{complete semantics}—for both individual and group modal operators, based on the observation functions, and show that it is sound and complete with respect to the \textbf{KD45} axioms.
To enable more efficient evaluation, they also introduce a \emph{ternary semantics}, and prove that it agrees with the complete semantics on a fragment of the logic known as \emph{logically separable formulae}, which excludes tautologies and contradictions.

They provide some general perspective functions and show how it can be customized for specific domains. 
This provides a level of expressiveness not possible in declarative planning languages by implementing the semantics into external functions and using the encoding following F-STRIPS.
Over several benchmarks, PWP solves problems faster than the state-of-the-art approach~\cite{DBLP:journals/ai/MuiseBFMMPS22}.

\subsection{Justified Perspective Model}
\label{sec:background:jpm}
\citeauthor{DBLP:conf/aips/Hu0L23}~\shortcite{DBLP:conf/aips/Hu0L23} 
extend the PWP approach with the Justified Perspective (JP) model, which reasons about belief as well as knowledge.
They introduce the belief operator $B_i$ and reason about belief by generating the agents' \emph{justified perspectives} (following \citeauthor{goldman1979justified}~\shortcite{goldman1979justified}'s intuition), which are state sequences that agents believe in.


They inherited the signature $\Sigma$ (Definition~\ref{def:pwp:signature}) from the PWP approach, and defined the grammar of Knowledge-Belief~($KB$) language $\L_{KB}(\Sigma)$ as:
\[
\varphi ::= \relationbasecase \mid \neg \varphi \mid \varphi \land \varphi \mid S_i v \mid S_i \varphi \mid K_i \varphi \mid B_i \varphi,
\]
\noindent where $r \in \relationset$, $V_r \subseteq V$ are the terms of $r$ and $r(V_r)$ form a predicate (the set of all predicates denoted as $\mathcal{P}$), $v\in V$, and $i \in Agt$.
$S_i v$ and $S_i \varphi$ mean agent $i$ sees variable $v$ and sees formula $\varphi$ respectively, while $K_i \varphi$ and $B_i \varphi$ represent agent $i$ knows/believes formula $\varphi$.
The difference between knowledge and belief is that belief could be false (not consistent with the actual world).
Then, they defined their model as:
\begin{definition}[JP Model]
\label{def:jp:model}
    Given a signature $\Sigma$, a JP model instance can be represented by a tuple: 
    \[
        \jpmodel \defmathend
    \]
\end{definition}
They also followed the same definition of the states from the PWP approach above, as well as the perspective function $\oldobservation_i$.
The $Agt$, $V$, and $\pwpdomains$ are from the given signature.
The interpretation function $\pi: \statespace \times \mathcal{P} \to \bool$ evaluates whether the predicate $\relationbasecase$ is true in a given state ($\pi$ is undefined if there exists $v \!\!\in\!\! V_r$ such that $v \!\!\notin\!\! s$ or $s(v) \!= \none$). 
In addition, they denote $\seq$ as a sequence of states, $n$ as the last timestamp of any sequence, and $\seq[t]$ as the state at timestamp $t$.
Besides, they extend the state override function (Definition~\ref{def:state_override}) and observation function (Definition~\ref{def:observation}) to take input of a sequence.
Specifically, the extended forms are defined as $s' \override{\seq}=[s' \override{\seq[0]}, \dots, s' \override{\seq[n]}]$ and $\observation_i(\seq)=[\observation_i(\seq[0]),\dots,\observation_i(\seq[n])]$.

Then, they formed agents' justified perspectives by: the retrieval function and the justified perspective function.

The \textbf{retrieval function $\memorization$} identifies the value of the variable $v$ with respect to timestamp $t$, which is the latest time an agent saw $v$ in the given sequence $\vec{s}$. 
The formal definition they gave is as follows:

\begin{definition}[Retrieval Function]
\label{def:jp:R}
Given a sequence of states  $\seq$, a timestamp $t$, and a variable $v$, the retrieval function, $\memorization: \seqspace \times \mathbb{Z} \times V \rightarrow \mathbb{D}$, is defined as:
    \[
    \memorization(\seq,t,v) = 
\begin{cases} 
\seq[\max(\text{LT})](v) & \text{if } \text{LT} \neq \{\} \\
\seq[\min(\text{RT})](v) & \text{else if } \text{RT} \neq \{\} \\
\none & \text{otherwise}
\end{cases}
    \]
    where LT $= \{j \mid v \!\in \!\seq[j] \ \land \ \seq[j](v) \!\neq \none \ \land \ j \!\leq t \}$ and \\RT $= \{j \mid v \!\in \!\seq[j] \ \land \ \seq[j](v) \!\neq \none \ \land \ t \!< \!j \!\leq \! |\seq|\}$. \defnormalend
\end{definition}

Their intuition is:
if $v$ is in the previous (or current) states in the given justified perspective, then they assume the $v$ stays unchanged since $\max(\text{LT})$, which is the most recent time $v$ in the given perspective before (or at) $t$;
else if $v$ has not been seen before (and at) $t$, then they assume the $v$ stays unchanged to $\min(\text{RT})$, which is the closest time $v$'s value revealed in the given perspective after $t$;
otherwise, $v$ is $\none$, as the given perspective does not contain $v$ at all.

A \textbf{justified perspective}
is a function that:
the input $\seq$ represents the sequence of states of a plan from a particular perspective, which could be an agent's perspective or the global perspective;
and, the output is a sequence of complete local states that $i$ believes, which is $i$'s justified perspective.
It contains $i$'s observation of the input sequence, as well as $i$'s memory, which both can be generated by function $R$ defined above.
They give the formal definition as below: 

\begin{definition}[Justified Perspective Function]
    \label{def:jpm:R}\label{def:jp:function}
Given the input state sequence $\seq$ as $[s_0,\dots,s_n]$, a \emph{Justified Perspective} (JP) function for agent~$i$, $\f_i: \vec{\statespacecomplete} \rightarrow \vec{\statespacecomplete}$, is defined as follows:
    \[
    \f_i([s_0,\dots,s_n])=[s'_0,\dots,s'_n] 
    \]
    where for $t \in [0, n]$ and $v \in V$:
    \[
        \begin{aligned}
            lt_v & = \max(\{j \mid v \in \observation_i(s_j) \land j \leq t \} \cup \{ -1\}) \text{,} & (1)\\
            e    & = \memorization([s_0,\dots,s_t],lt_v,v), & (2)\\
            s''_t & = \{v \assign e \mid s_t(v)=e \lor v \notin \observation_i(s_t\langle \{v \assign e\} \rangle) \}, & (3)\\
            s'_t  & = s_{\none} \langle s''_t \rangle. & (4)
        \end{aligned} \defmathend
    \]
\end{definition}

Line (4) ensures the output is a complete-state sequence, in which the missing variables are filled in with none value ($\none$) assignments.
The value $lt_v$ is the last timestamp the agent $i$ sees $v$, including $-1$ to ensure it is a non-empty set, where $-1$ means this has not been seen. 
Then, the value of $v$ that agent $i$ saw (or should have seen) is retrieved by function $R$ in Line (2) from the above definition. 
Then, Line (3) forms a justified state at timestamp $t$, while, in the meantime, $v \notin \observation_i(s_t\langle \{v \assign e\} \rangle) $ ensures $i$'s memory is consistent with what $i$ sees at the current timestamp $t$.
That is, $s_t\langle \{v \assign e\} \rangle)$ is the state $s_t$ with the value of $v$ overridden with $e$, which may not be $v$'s value in $s_t$. 
Therefore, if agent $i$ observed $v=e$ in the past, then $v=e$ is its current belief if $v \notin \observation_i(s_t\langle \{v \assign e\} \rangle)$. 
If $v=e$ in $s_t$ \emph{and} agent $i$ would see it if it was, then if it does not see $v=e$ in $s_t$, it must be that $v \neq e$. 

\textbf{Semantics} are also provided by them in the format of complete semantics and ternary semantics.
Here, we give their ternary semantics\footnote{Because only the ternary semantics is used in the experiments of this paper, while the complete semantics are provided in the supplementary material.} as follows:

\begin{definition}[Ternary semantics]
\label{def:jp:ternary}
Given a JP model instance $M$, state sequence $\seq$ and any epistemic formula in Language $L_{KB}(\Sigma)$, the ternary function $T$ is defined as, omitting model $M$ for readability:

    \begin{supertabular}{@{}ll@{~}l@{~~}l@{~~}r}
      (a) & $T[\seq, r(V_r)]$ & $=$ & 1 if $\pi(\seq[n], r(V_r)) = true$;\\
                 &                          &     & 0 else if $\pi(\seq[n], r(V_r)) = false$;\\
                 &                          &     & $\unknown$ otherwise\\[1mm]
                (b) & $T[\seq, \varphi \land \psi]$ & $=$ & $\min(T[\seq, \varphi], T[\seq, \psi])$\\[1mm]
                (c) & $T[\seq, \neg \varphi]$    & $=$ & $1 - T[\seq, \varphi]$\\[1mm]
                (d) & $T[\seq, S_i v]$           & $=$ & $\unknown$ if $ v \notin \seq[n]$ or $ i \notin \seq[n]$\\[0.5mm]
                 &                          &   & $0$ else if $v \notin \observation_i(\seq[n])$\\
                 &                          &   & $1$ otherwise     \\[1mm]
                (e) & $T[\seq, S_i \varphi]$  & $=$ & $\unknown$ if $T[\seq,\varphi] = \unknown$ or $ i \notin \seq[n]$;\\
                 &                          &   & $0$ else if $T[\observation_i(\seq), \varphi] = \unknown$;\\
                 &                          &   & $1$ otherwise\\[1mm]
                (f) & $T[\seq, K_i \varphi]$ & = & $ T[\seq, \varphi \land S_i\varphi]$\\[1mm]
                (g) & $T[\seq, B_i \varphi]$ & = & $ T[\f_i(s_{\none} \override{\seq}), \varphi]$ \\[1mm]
                \multicolumn{4}{l}{where $n$ is the last (current) timestamp of $\seq$.} & $\blacksquare$\\
    \end{supertabular}

\end{definition}

The ternary semantics employs three truth values: 0 (false), 1 (true), and $\frac{1}{2}$ (unknown). 
Their definition of the ternary semantics allows for arbitrary nesting. 
In Item (g), $s_{\none} \override{\seq}$ ensures that the input to the JP function becomes a sequence of complete states (could be partial due to Item (e)) by filling in the missing variables in the partial states with the none value $\none$.
Using sequence $\vec{s'}\assign[s_0,s_1]$ in Figure~\ref{fig:plan_1} as an example, although $T[\vec{s'}, S_b S_a n]=0$, $T[\vec{s'}, B_b S_a n]=1$ ($T[\observation_b(\vec{s'}),S_a n]=\unknown$ since $n \notin \observation_b(\vec{s'})[1]$).
That is, agent $b$ does not directly see that agent $a$ sees $n$ (knowledge). 
But with the belief of $n$'s existence, $b$ believes that $a$ sees $n$.

Moreover, \citeauthor{DBLP:conf/aips/Hu0L23}~\shortcite{DBLP:conf/aips/Hu0L23} demonstrate that the evaluation time for the ternary semantics is polynomial with respect to search path length and state size, and linear in relation to epistemic formula depth.
They proved their semantics follow the axiomatic system \textbf{KD45}.

By defining observation functions for each agent, \citeauthor{DBLP:conf/aips/Hu0L23}~\shortcite{DBLP:conf/aips/Hu0L23} implemented their own planner following F-STRIPS~\cite{geffner2000functional}. 
Their results show that it is  state-of-the-art in most domains, except the ones with large branching factors, such as Grapevine~\cite{DBLP:journals/ai/MuiseBFMMPS22}, due to the na\"ive blind search algorithm used.

Compared to their previous perspective model in the PWP approach, which was purely based on the observation of the current state, JP model forms perspectives also based on previous observations.
However, JP model only models agents' nested belief under multi-agent settings, not group belief. 
We define group belief in the current paper.

\section{Group Justified Perspective Model}
\label{sec:model}
In this section, we formally propose our group justified perspective (GJP) model by adding group operations for uniform belief, distributed belief, and common belief to the JP model~\cite{DBLP:conf/aips/Hu0L23}, inheriting the existing group modal operators from the PWP approach~\cite{DBLP:journals/jair/Hu0L22}.


The signature of our model $\signature$ is the same as in Definition~\ref{def:pwp:signature}, and the language for group knowledge and belief ($\L_{G\!K\!B}$) is defined by the following grammar:
\begin{definition}[Language]
\label{def:gjp:language}
Given a signature $\Sigma$, the language $\L_{G\!K\!B}(\Sigma)$ is defined by the grammar:
\[
    \begin{array}{@{~}l@{~}l@{~}l}
        \varphi  ::= & \relationbasecase \mid \neg \varphi \mid \varphi \land \varphi \mid S_i v \mid S_i \varphi \mid K_i \varphi \\[1mm]
        & \mid \es_G \varphi \mid \ds_G \varphi \mid \cs_G \varphi \mid \ek_G \varphi \mid \dk_G \varphi \mid \ck_G \varphi \\[1mm]
        & \mid B_i \varphi \mid \eb_G \varphi \mid \db_G \varphi \mid \cb_G \varphi\\[1mm]
    \end{array}
\]
\noindent where $r \in \relationset$, $V_r \subseteq V$ are the terms of $r$ and $r(V_r)$ form a predicate, $v\in V$, and $i \in Agt$. \defnormalend
\end{definition}

Group seeing operators, $\es$, $\ds$ and $\cs$, and knowledge operators, $\ek$, $\dk$ and $\ck$ are from the PWP model~\cite{DBLP:journals/jair/Hu0L22}, while $B_i$ is from the JP model~\cite{DBLP:conf/aips/Hu0L23}.
In this paper, we only focus on introducing the group belief operators, $\eb_G \varphi$, $\db_G \varphi$ and $\cb_G \varphi$, to represent that agents in $G$ uniformly, distributedly and commonly believe $\varphi$ respectively.

\noindent\textbf{Functions:}\quad The observation function $O_i$ and the JP function $\f_i$ are the same as in 
Section~\ref{sec:background:pwp} and Section~\ref{sec:background:jpm}.

\noindent\textbf{Semantics:}\quad We inherit the ternary semantics from the PWP model and the JP model.
The items (a)-(g) follow Definition~\ref{def:jp:ternary}. 
The items (h)-(o) are the group ternary semantics from \cite{DBLP:journals/jair/Hu0L22}, which can be found in the supplementary material, since they are not relevant to the contribution of this paper.
Thus, new semantics start from (p) as in Definition~\ref{def:gjp:eb_ternary}.

Later in this section, we define group justified perspective functions for uniform belief, distributed belief, and common belief, and add ternary semantics for them.
In addition, we validate our semantics on the standard \textbf{KD45} axioms. 
Due to this part being less interesting to the general reader, we put theorems and proofs in the supplementary material.


\subsection{Uniform Belief}
\label{sec:model:uniform}

Uniform Belief is straightforward.
Since a uniform belief of $\varphi$ is that everyone in the group believes $\varphi$, the uniform justified perspective function is just a set union of everyone's individual justified perspectives.

\begin{definition} (Uniform Justified Perspectives)
\label{def:gjp:ef}
  \[
  \ef_G(\vec{s}) = \textstyle\bigcup_{i \in G} \{f_i(\vec{s}) \} \defmathend
  \]
\end{definition}

\begin{definition}[Ternary Semantics for Uniform Belief] 
\label{def:gjp:eb_ternary}
Given an instance of GJP model (omitting it for readability), uniform belief $\eb_G$ for group $G$ is defined:
\[
\text{(p): }\ \T[\vec{s}, \eb_G \varphi] = \min\left(\left\{\T[ \vec{g}, \varphi] \mid \vec{g} \in \ef_G(s_{\none} \override{\vec{s}}) \right\}\right) \tag*{\text{$\blacksquare$}}
\]
\end{definition}

The ternary value of $\T[\vec{s}, \eb_G \varphi]$ depends on the agent that holds the most conservative beliefs of $\varphi$.

Although the ternary semantics of an individual's belief in Definition~\ref{def:jp:ternary} is a \textbf{KD45} semantics, the uniform belief does not follow Axiom \textbf{4} (Positive Introspection) and \textbf{5} (Negative Introspection).
Intuitively, every agent in the group believes $\varphi$ does not guarantee that every agent in the group believes others believe $\varphi$.
That is, $\eb$ follows Axiom \textbf{K} and \textbf{D}.



\subsection{Distributed Belief}
\label{sec:model:distributed}


Distributed Belief is more challenging compared to distributed knowledge.  
The Knowledge Axiom \textbf{T} ($K_i \varphi \Rightarrow \varphi$), which states that knowledge must be true, does not hold for belief. 
This means that agents can hold incorrect beliefs.
If we simply take the distributed union of the perspectives for all agents $i\in G$, the generated set could contain conflicting (inconsistent) beliefs.
To ensure consistency, we form the group distributed justified perspective instead of just uniting each agent's justified perspective.
Intuitively, agents follow their own observations and ``listen'' to agents that have seen variables more recently.
The distributed perspective function $\df$ is defined as follows.

\begin{definition}
[Distributed Justified Perspectives]
    \label{def:gjp:df}
    The distributed justified perspective function for a group of agents \(G\) is defined as follows:
    \[
    df_G([s_0,\dots,s_n])=[s'_0,\dots,s'_n] 
    \]
    where for all \(t \in [0, n]\) and all \(v \in \dom(s_t)\):
    \[
    \begin{aligned}
        lt_v & \assign \max(\{j \mid v \in \textstyle\bigcup_{i \in G} \observation_i(s_j) \land j \leq t \} \cup \{ -1\}) \text{,} & (1)\\
        e    & \assign  \memorization([s_0,\dots,s_t],lt_v,v), & (2)\\
        s''_t & \assign  \{v \assign e \mid s_t(v) \assign e \lor v \notin \textstyle\bigcup_{i \in G} \observation_i(s_t\langle \{v \assign e\} \rangle) \}, & (3)\\
        s'_t  & \assign  s_{\none} \langle s''_t \rangle. & (4)\\
    \end{aligned} \defmathend
    \]
\end{definition}

In this definition, the group distributed justified perspective follows everyone's observation and uses the retrieval function $\memorization$ (in Definition~\ref{def:jpm:R}) to identify the value of the variables that are or were not seen by any agent from the group. 
Intuitively, given any agent $i$ in the group, the value from $i$'s observation in timestamp $t$, $\observation_i(s_t)$, which leads to knowledge, must be true (Axiom T) in $s_t$.
While the value of an unseen variable is determined by anyone in the group that saw it last.
To be specific, the last timestamp the group sees $v$, $lt_v$, is determined by the group observation (formed by union), and then, the value $e$ is retrieved by identifying the closest value that is consistent with it. 
Line (3) ensures the ``group memory'' is consistent with the group observation, while Line (4) ensures the group justified perspective is a sequence of complete states.
So, this definition mimics the definition of the JP function from Definition~\ref{def:jp:function}, except that the variable's value in a state $s'_t$ is taken by the agent(s) that have the most recent view of it.

\begin{figure}[t]
    \centering
    \includegraphics[width=0.47\textwidth]{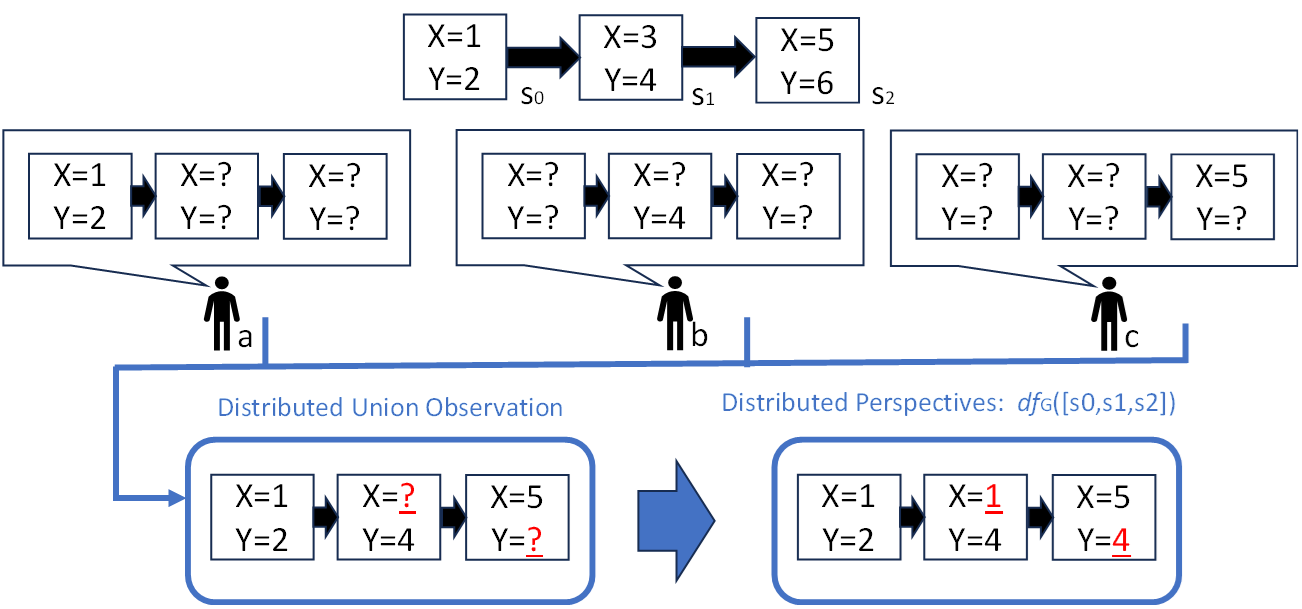}
    \caption{ State sequence $\vec{s}$ and $df_G(\vec{s})$ in Example~\ref{example:distributed}}
    \label{fig:df}
\end{figure}

\begin{example}
\label{example:distributed}
    Let the set of variables be $V \assign \{x,y\}$, domains be $D_x \assign D_y \assign \{1,\dots,6\}$, and a state\footnote{We use the shorthand $m\text{-}n$ to represent the state $\{x \assign m,y \assign n\}$ and $\_$ to represent a missing variable.} sequence be $\vec{s}  \assign  [s_0,s_1,s_2] \assign [1\text{-}2,3\text{-}4, 5\text{-}6]$. 
    Assume $a$ sees $x$ and $y$ in $s_0$, while $b$ sees $y$ in $s_1$ and $c$ sees  $x$  in $s_2$. So, $\observation_a(\vec{s})$=[$1\text{-}2$, $\_\text{-}\_$, $\_\text{-}\_$], $\observation_b(\vec{s})$=[$\_\text{-}\_$, $\_\text{-}4$, $\_\text{-}\_$] and $\observation_c(\vec{s})$=[$\_\text{-}\_$, $\_\text{-}\_$, $5\text{-}\_$]. 
    This is visualized in Figure~\ref{fig:df}.
\end{example}

Intuitively, we can see from Example~\ref{example:distributed} that forming distributed belief is about finding the observation from each agent and deducing the value of group unseen variables, following the same intuition as the JP Model (In Section~\ref{sec:background:jpm}).
Missing values in group observations (noted as ``\textcolor{red}{\underline{?}}'') are retrieved from the ``group memory'' (previous group observation), equating to retrieval from the agent who last observed this value. 
Thus, $\df_G(\seq)=[1\text{-}2,1\text{-}4,5\text{-}4]$.


\begin{definition}[Ternary Semantics for Distributed Belief]
\label{def:gjp:db_ternary}
The distributed ternary semantics are defined using function $\T$, omitting the model $M$ for readability:
\[
\text{(q): } \ \T[\vec{s}, \db_G \varphi] = \T[ \df_G(s_{\none} \override{\vec{s}}), \varphi] \defmathend
\]
\end{definition}

This semantics guarantees that the group distributed justified belief is consistent.
That is done by only merging agents' observations into the group distributed observation, which was consistent with the global state sequence, and deducing the value of unseen variables from it.
This definition is particularly nice as many existing definitions of distributed belief require us to define preference relations over e.g.\ agents or states, to resolve conflicts; see e.g. \cite{liau2003belief}. 
In our definition, the preference relation is implicit and prefers more recent observations over older ones.

At last, we validate the above semantics is a \textbf{KD45} semantics.
The idea of the proof is by having one imaginary agent $x$ whose observation function is the union of the observations of the whole group $G$ ($\observation_x(s)= \textstyle\bigcup_{i\in G} \observation_i(s)$).
Then, by proving $x$'s function $\observation_x$ that is contractive, idempotent, and monotonic, following proofs in \citeauthor{DBLP:conf/aips/Hu0L23}, the ternary semantics (Definition~\ref{def:gjp:db_ternary}) follows \textbf{KD45}.

\subsection{Common Belief}
\label{sec:model:common}
Common belief is the infinite nesting of belief.
Our definition avoids having to calculate the infinite regression by calculating the fixed point of the group's perspectives.

\begin{definition}[Common Justified Perspectives]
\label{def:gjp:cf}
Given a set of perspectives (that is, a set of sequences of states) $\vec{S}$, the common justified perspective is defined as:
  \[
  \cf_G(\vec{S})  \!\assign\!  
  \begin{cases}
       \bigcup_{\vec{s}  \in \vec{S}} \ef_G(\vec{s}) & \text{if } \bigcup_{\vec{s}  \in \vec{S}} \ef_G(\vec{s})  \!\assign\!  \vec{s}\\
          \cf_G(\bigcup_{\vec{s}  \in \vec{S}} \ef_G(\vec{s})) & \text{otherwise.}
    \end{cases} \defmathend
   \]
\end{definition}

The function applies a set union on the uniform perspectives of the group for each input perspective.
Then, the common perspective function repeatedly calls itself by using the output of one iteration as the input of the next iteration, until the input set and output set are the same, which means a convergence of the common perspectives.
Semantically speaking, each iteration adds one level deeper nested perspectives of everyone's uniform belief for evaluation on whether everyone in the group believes. 

\begin{definition}[Ternary Semantics for Common Belief]
\label{def:gjp:cb_ternary}
The group ternary semantics are defined using function $\T$, omitting the model $M$ for readability:
\[
\text{(r): } \ \T[\vec{s}, \cb_G \varphi] \assign \min(\{\T[ \vec{g}, \varphi] \mid \vec{g}  \!\in\! \cf_G(\{s_{\none} \override{\vec{s}} \}) \}) \defmathend
\]
\end{definition}


The common justified perspectives function $\cf_G$ contains the fixed point of all agents' perspectives, their perspectives about others' perspectives, and so on to infinite depth. 
Although the depth is infinite, the definition of $\cf_G$ converges in finite iterations (proof in supplementary material):

\begin{theorem}
\label{thm:fixed-point}
Given a state sequence $\seq$, the iterations needed for $\cf_G(\{\vec{s}\})$ to converge are bounded above $2^{|V|\times |\seq|}$.
\end{theorem}

Although in the worst-case scenario, the maximum number of iterations is $2^{|V|\times |\seq|}$, practically, in our experiments, we find that it converges after a few iterations (Section~\ref{sec:exp}\!).

An example for group justified perspective functions is provided using the same problem in Example~\ref{example:number} as follows:
\begin{example} 
\label{example:groupp}
    Let us use Plan~\ref{plan:1} and let $G  \!\assign\!  \{a,b\}$.
    The sequences of the states can be represented by a list of states\footnote{We use shorthand $\text{t-}\text{f-}2$ to represent $\{peeking_a \!\assign\! true, peeking_b \!\assign\! false, n \!\assign\! 2\}$ is represented as $\text{t-}\text{f-}2$.} as: 
    $\seq  \assign  [\text{f-}\text{f-}2,\text{t-}\text{f-}2,\text{f-}\text{f-}2,\text{f-}\text{f-}1,\text{f-}\text{t-}1]$.
\end{example}

Then, the common justified perspective $\cf_G(\{\vec{s}\})$ of a group $G$ in the above example  converged within 3 iterations (a step-by-step illustration is provided in supplementary material). 
The output is $\{[\text{f-}\text{f-}\!\none,\text{t-}\text{f-}2,\text{f-}\text{f-}2,\text{f-}\text{f-}2,\text{f-}\text{t-}2]$, $[\text{f-}\text{f-}\!\none,\text{t-}\text{f-}\!\none,\text{f-}\text{f-}\!\none,\text{f-}\text{f-}\!\none,\text{f-}\text{t-}1]$, $[\text{f-}\text{f-}\!\none,\text{t-}\text{f-}\!\none,\text{f-}\text{f-}\!\none,\text{f-}\text{f-}\!\none,\text{f-}\text{t-}2]\}$.
According to the ternary semantics of the $\cb$ operator, $\T[\vec{s},\cb_G n<3] \!\assign\! \min(\{\T[\vec{g}, n<3] \mid \vec{g} \!\in\! \cf_G(\{\seq\})\}) \!\assign\! \min(\{1,1,1\}) \!\assign\! 1$.
Therefore, Plan~\ref{plan:1} can form a common belief for the group $G \!\assign\! \{a,b\}$ that $\cb_G n<3$.
At last, we validate that the ternary semantics of the $\cb$ operator is a \textbf{KD4} semantics.

To sum up, this section presents semantics for group uniform, distributed, and common belief.
Given that uniform belief evaluates each agent's justified perspective, while distributed belief synthesizes a justified perspective from a group's collective observations, the time complexity of these is polynomial, specifically scaled by the number of agents, analogous to the JP model~\cite{DBLP:conf/aips/Hu0L23}.
Theorem~\ref{thm:fixed-point} shows that the worst-case time complexity for common belief is exponential, factoring in the iterations to identify the fix-point set (common justified perspectives) of individual justified perspectives.
\begin{table*}[!th]
  \scriptsize
  \centering
  \begin{tabular}{lrrrrrrrrrrl}
    \toprule
    \multirow{2}{*}{ID} & \multirow{2}{*}{Gen} & \multicolumn{2}{c}{$n$ in $\cf^n$} & \multicolumn{2}{c}{$|\cf|$} & \multicolumn{3}{c}{External} & Total & \multirow{2}{*}{$|p|$} & \multirow{2}{*}{Goals} \\
    & & Max & Avg & Max & Avg & \#$\cf$ & \#$calls$ & $\overline{\text{T}}$ (ms) & T (s) & & \\
    \midrule
    N0 & 141 & 0 & 0 & 0 & 0 & 0 & 141 & 0.10 & 0.04 & 4 & $\eb_{G}\ n<2$ \\
    N1 & 26 & 0 & 0 & 0 & 0 & 0 & 26 & 0.13 & 0.01 & 2 & $\db_{G}\ n<2$ \\
    N2 & 141 & 4 & 2.20 & 5 & 2.16 & 141 & 141 & 0.43 & 0.09 & 4 & $\cb_{G}\ n<2$ \\
    N3 & 141 & 4 & 1.87 & 5 & 1.86 & 446 & 141 & 0.91 & 0.15 & 4 & $\cb_{G} \cb_{G}\ n<2$ \\
    N4 & 141 & 4 & 1.76 & 5 & 1.75 & 971 & 141 & 1.70 & 0.26 & 4 & $\cb_{G} \cb_{G} \cb_{G}\ n<2$ \\
    N5 & 112 & 3 & 2.13 & 4 & 2.04 & 112 & 112 & 0.42 & 0.07 & 4 & $\neg \eb_{G}\ n \assign 1 \land \neg \eb_{G}\ n \assign 2 \land \cb_{G}\ n<2$ \\
    N6 & 178 & 3 & 1.65 & 3 & 1.65 & 356 & 178 & 0.48 & 0.12 & 4 & $B_a \cb_{G}\ n \assign 2 \land B_b \cb_{G}\ n \assign 1$ \\
    \midrule
    G0 & 36 & 4 & 3.03 & 7 & 5.36 & 36 & 41 & 2.76 & 0.13 & 1 & $\cb_{G}\ sct_a  \assign  t$ \\
    G1 & 36 & 4 & 2.10 & 7 & 2.69 & 229 & 41 & 7.51 & 0.32 & 1 & $\cb_{G}\cb_{G}\ sct_a  \assign  t$ \\
    G2 & 36 & 4 & 1.79 & 7 & 2.17 & 652 & 41 & 16.00 & 0.67 & 1 & $\cb_{G}\cb_{G}\cb_{G}\ sct_a  \assign  t$ \\
    G3 & 36 & 4 & 1.66 & 7 & 1.99 & 1451 & 41 & 32.78 & 1.36 & 1 & $\cb_{G}\cb_{G}\cb_{G}\cb_{G}\ sct_a  \assign  t$ \\
    G4 & 1474 & 4 & 3.38 & 10 & 6.74 & 1474 & 1860 & 5.52 & 11.12 & 4 & $\eb_{G}\ sct_a  \assign  t \land \neg \cb_{G}\ sct_a  \assign  t$ \\
    G5 & 8997 & 5 & 3.50 & 14 & 8.30 & 8997 & 13350 & 10.63 & 151.92 & 6 & $\eb_{G} \eb_{G}\ sct_a  \assign  t \land \neg \cb_{G}\ sct_a  \assign  t$ \\
    G6 & 914 & 4 & 3.02 & 8 & 4.21 & 1828 & 1138 & 4.74 & 5.91 & 3 & $B_b \cb_{G}\ sct_a  \assign  f \land \cb_{\{a,c,d\}} sct_a  \assign  t$ \\
    G7 & 2960 & 4 & 2.66 & 8 & 3.18 & 14800 & 3792 & 10.23 & 40.72 & 4 & $\cb_{\{b,c\}} \cb_{G}\ sct_a  \assign  f \land \cb_{\{a,d\}} sct_a  \assign  t$ \\
    G8 & 1474 & 4 & 3.38 & 10 & 6.74 & 1474 & 1860 & 6.84 & 13.61 & 4 & $\db_{G} \eb_{G}\ sct_a  \assign  t \land \neg \cb_{G}\ sct_a  \assign  t$ \\
    G9 & 1530 & 0 & 0 & 0 & 0 & 0 & 1926 & 1.23 & 3.31 & 4 & $\db_{G} \eb_{G}\ sct_a  \assign  t \land \neg B_a \eb_{G}\ sct_a  \assign  t$ \\

    \bottomrule
  \end{tabular}

  \caption{Result N0--N6 and G0--G9 are instances for Number and Grapevine domain respectively. 
  $G$ represents the group of all agents -- $\{a,b\}$ for Number; and $\{a,b,c,d\}$ for Grapevine. 
  ``Gen'' is the number of search nodes generated during search. 
  ``Max'' and ``Avg'' under ``$n$ in $\cf^n$'' are the maximum and average number of iterations for each $\cf$, 
  and ``Max'' and ``Avg'' under $|\cf|$ represent the maximum and average size of converged $\cf$.
  ``\#$\cf$'' under ``External'' represents the number of $\cf$ function calls (could be more than the external function calls due to nested common belief or multiple common beliefs). 
  ``\#$calls$'' and ``$\overline{\text{T}}$ (ms)'' are the number and average time of external function calls. 
  ``T (s)'' is total runtime, and $|p|$ is the plan length.}
  \label{tab:all}
\end{table*}


\section{Experiments}
\label{sec:exp}
Since there are no planning benchmarks for group belief, we select two domains (Number and Grapevine) from existing work~\cite{DBLP:journals/jair/Hu0L22,DBLP:conf/aips/Hu0L23} and add several challenging instances that use group belief
, including instances with inconsistent or nested group beliefs.

The source code of the planner, the domain, the problem, and external function files, as well as experimental results, are downloadable from:
\textbf{omitted to anonymity}.
%
We extend the F-STRIPS planner from \cite{DBLP:conf/aips/Hu0L23}.
To demonstrate the efficiency of our model instead of the particular search algorithms, we use the BrFS (breadth-first search) search algorithm with duplicate removal.
The experiments are run on a Linux machine (Ubuntu 20.04) with 8 CPUs (Intel i7-10510U 1.80GHz) and 16GB RAM.
Similar to the JP model~\cite{DBLP:conf/aips/Hu0L23}, the external functions, implemented in Python, evaluate the belief formulae (either in goals or in actions' preconditions) following our new semantics defined in Section~\ref{sec:model} when search nodes are generated.


\textbf{Number} is the same as described in Example~\ref{example:number}, while 
\textbf{Grapevine} is a benchmark domain~\cite{DBLP:journals/ai/MuiseBFMMPS22}.
In two adjacent rooms, 4 agents (in the same room), each have their own secret (For simplicity, we only consider agent $a$'s secret $sct_a \!\assign\! t$).
All agents can move between two rooms and \emph{share} or \emph{lie} about a secret, if either the secret is their own or they have heard the secret.
That is, they need to have a valid belief about the secret ($B_i\ sct_j\!\!\neq \none$) before they can share it.

\subsection{Results}
\label{sec:exp:result}
The results can be found in Table~\ref{tab:all}. 
All group beliefs, except common belief, can be evaluated easily, which is indicated by $\overline{\text{T}}$ (N0, N1, and G9).
It takes longer to evaluate common belief as it requires the common perspective set to converge; 
however, the number of iterations for $\cf$ to converge is around $1.65-3.50$ -- much less than the worst-case identified in Theorem~\ref{thm:fixed-point}.
This is because, in practical epistemic benchmarks, the number of justified perspectives (beliefs) is bounded by the actual state sequence and the difference between each single agent's nested perspectives, resulting in relatively small converged sizes (as shown by $|\cf|$). 
In addition, N2-N4 and G0-G4 have the exact same search trees (as their goals are semantically equivalent due to $\cb$ satisfying Axiom~\textbf{4}), while when finding the converged common perspective set for nested common belief (using G2 as an example), the growing rate of \#$cf$ (excluded \#$\cf$ in G0 and G1) is approximately the average of $|\cf|$ for the previous nesting depth (for \#$\cf$ in G2: $229+(229-36)\times \overline{|\cf^2|} = 651.67 \approx 652$, where $\overline{|\cf^2|} = (229 \times 2.69 - 36 \times 5.36) \div (229-36)=2.19$) instead of the number of agents.
This growing rate often keeps decreasing (larger than $1$) when the common perspective sets not only contain the actual global perspective ($\cf({\seq})=\{\seq\} \rightarrow  \overline{|\cf^n|}=1$, the extreme lower-bound results in \#$\cf$ being linear) in all sequences that occurred during the search.
This can be verified with Example~\ref{example:groupp} as well.


Semantically, it is worth noting that everyone believes does not result in a common belief (G4). 
Even that everyone believes that everyone believes $a$'s secret is true does not form a common belief of $s$'s secret is true (G5).




\section{Conclusion and Future work}
\label{sec:conclusion}
In this paper, we define an extension to the JP model to handle group beliefs; 
implement its ternary semantics as an action-model-free planning tool;
discuss and show the axiomatic system each group belief operator follows;
and demonstrate its expressiveness and efficiency on newly featured domains.
The results show that our approach can effectively handle multi-agent epistemic planning problems with group beliefs and do so efficiently, even with a simple prototype F-STRIPS planner implementing the BrFS.

For future work, we will implement efficient search algorithms for our planner.
Novel non-Markovian search algorithms may be necessary, as our GJP model works with state sequences.
In addition, although there is a growing body of research in epistemic planning, there are still no uniform standards on either epistemic planning language or benchmark domains.
It would be valuable to revisit existing approaches and model our benchmarks in those.
Future research could broaden the field's applicability by relaxing assumptions based on classical planning, particularly in dynamic environments and in human-agent interaction domains with a formal human belief model.

\appendix
\section{Examples}

Here, we provide detailed explanations for some difficult examples in our formalization in Section~\ref{sec:model}.

\subsection{Distributed Perspective Function}

For the example (Example~\ref{example:distributed} and Figure~\ref{fig:df}) of distributed perspective function (Definition~\ref{def:gjp:df}), the conclusion can be worked out as follows:

Following the formalization in Definition~\ref{def:gjp:df}, $df_G(\seq)[0]=1\text{-}2$ is trivial (both $x$ and $y$ are in $\bigcup_{i\in G}\observation_i(s_0)$ because of $a$).
When $t$ is 1,  $df_G(\seq)[1](y)=4$ trivially holds for the same reason, while $lt_x$ (Line 1) is still $0$ because none of the agents see $x$ at $s_1$.
Therefore, the retrieved value $R([s_0,s_1],0,x)=s_0(x)=1$ (Line 2). 
$x=1$ is in $s_1''$ ($s_1''=1\text{-}4$) since that even if the global state updated from $3\text{-}4$ to $1\text{-}4$, $x$ is still not in anyone's observation $\bigcup_{i\in G}\observation_i(1\text{-}4)$ (Line 3).
At last, when $t$ is 2, $df_G(\seq)[2](x)=5$ holds since $c$ sees it, while $lt_y=max(\{0,1\}\cup \{-1\})$ (Line 1) is 1 ($a$ sees it in $s_0$ and $b$ sees it in $s_1$).
$s_2''$ is $5\text{-}4$ ($R([s_0,s_1,s_2],1,y)=4$).
Thus, $\df_G(\seq)=[1\text{-}2,1\text{-}4,5\text{-}4]$.

\subsection{Common Perspective Function}

For clarity, we inherit the following theorem, which is proposed and proved by \citeauthor{guang2025thesis}~\shortcite{guang2025thesis}.
\begin{theorem}
For any agent $i \in Agt$ and perspective $\Vec{s} \in \Vec{S}$:
\label{thm:fieqfifi}
    \[
        \f_i(\vec{s}) = \f_i(\f_i(\vec{s}))
    \]
\end{theorem}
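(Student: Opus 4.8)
The plan is to show that applying the justified perspective function a second time changes nothing, by arguing that the key quantities in Definition~\ref{def:jpm:jp} — the visibility timestamp set $a\timestamp(v)$ and the retrieved value $e = \memorization(\cdot,\text{lt},v)$ — are invariant under re-applying $\f_i$. First I would fix an agent $i$, a sequence $\vec{s}=[s_0,\dots,s_n]$, and write $\f_i(\vec{s})=[s'_0,\dots,s'_n]$ and $\f_i(\f_i(\vec{s}))=[s''_0,\dots,s''_n]$. The goal is to prove $s''_t = s'_t$ for every $t\in[0,n]$. The natural tool is property~(2) of the observation function, $\observation_i(s)=\observation_i(\observation_i(s))$, together with the fact that $\f_i$ essentially fills in a state from $i$'s own observations: by construction $\dom(s'_t)\subseteq\{v : v\in\dom(\observation_i(s_j))\text{ for some }j\le t\}$, and crucially $v\in\dom(s'_t)$ exactly when $a\timestamp(v)\ne\{-1\}$, i.e. when $i$ has seen $v$ by time $t$ in $\vec{s}$.

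The main step is to show that $i$'s observations of the \emph{derived} sequence coincide with its observations of the original: $\observation_i(s'_j) = \observation_i(s_j)$ restricted to what $i$ can see — more precisely, $\dom(\observation_i(s'_j)) = \dom(\observation_i(s_j))$ and the values agree there. This should follow because $s'_j$ already contains exactly the variables $i$ has seen up to time $j$ with the values $i$ believes, so $i$ sees all of $s'_j$ that it would see of $s_j$, and property~(2) guarantees $i$ does not ``lose'' or ``gain'' visibility by looking at its own perspective. Hence the second-round timestamp set $a\timestamp'(v) = \{j : v\in\dom(\observation_i(s'_j))\land j\le t\}\cup\{-1\}$ equals the first-round $a\timestamp(v)$, so $\text{lt}$ is the same. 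Then I would show the retrieval function returns the same value: $\memorization([s'_0,\dots,s'_t],\text{lt},v) = \memorization([s_0,\dots,s_t],\text{lt},v)$. This holds because at time $\text{lt}$ we have $v\in\dom(s'_{\text{lt}})$ with $s'_{\text{lt}}(v)$ equal to the value $\memorization$ picked in round one, so the first case of Definition~\ref{def:jpm:R} fires and returns exactly that value — matching what the original retrieval gave. Therefore $s''_t=s'_t$.

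The subtle point — and the main obstacle — is handling the $rts$ (forward-search) case of $\memorization$ cleanly. In the original sequence, a variable $v$ first seen by $i$ at some time $\text{lt}$ may get its value at $s'_t$ (for $t\ge\text{lt}$) from a \emph{future} state via $s_{\min(rts)}$; I must check that in the derived sequence this same value is already sitting in $s'_{\text{lt}},\dots,s'_n$ so that round two finds it in the ``present'' or ``past'' branch rather than re-deriving it, and that the two branches agree. One also has to be careful that for $t < \text{lt}$ (before $i$ ever sees $v$), if the original $\f_i$ assigned $v$ a value in $s'_t$ via forward search, the second application reproduces it — again via property~(3) (monotonicity) and the structure of $a\timestamp$. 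I would organize this as a short induction on $t$, or handle it directly by case analysis on which branch of $\memorization$ applies; the details are routine once the observation-invariance lemma $\dom(\observation_i(s'_j))=\dom(\observation_i(s_j))$ is established, which is where I would spend most of the argument.
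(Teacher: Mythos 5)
Your overall strategy (show the second application recomputes the same last-seen timestamp and retrieves the same value) is close in spirit to the paper's timestamp-by-timestamp argument, but the lemma you make everything rest on --- $\dom(\observation_i(s'_j)) = \dom(\observation_i(s_j))$ with agreeing values --- is not provable from properties (1)--(3), and the paper's proof does not claim it. What the properties give is only one inclusion: every variable $i$ currently sees at time $j$ is put into $s'_j$ with its true value, so $\observation_i(s_j) \subseteq s'_j$, and then monotonicity (3) plus idempotence (2) yield $\observation_i(s_j) = \observation_i(\observation_i(s_j)) \subseteq \observation_i(s'_j)$. The reverse inclusion can fail: $s'_j$ is not of the form $\observation_i(\cdot)$ --- it also carries stale, remembered values of variables $i$ cannot currently see --- and nothing in (1)--(3) prevents $\observation_i$, evaluated on such a state, from revealing variables that $\observation_i(s_j)$ does not (visibility may depend on the values of other variables, e.g.\ positions and directions in BBL, and those values in $s'_j$ may be outdated). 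Property (2) is idempotence on a single state; it says nothing about visibility being preserved when you pass from $s_j$ to the different state $s'_j$. Hence your claim that the second-round set $a\timestamp(v)$ equals the first-round one, so that $lt$ is unchanged, does not go through; the second-round $lt$ may be strictly later.

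This "gained visibility" situation is precisely the case the paper's proof isolates as its Condition (2), namely $v \notin \observation_i(s_t)$ yet $v \in \observation_i(\f_i(\vec{s})[t])$: there the second application does not re-derive anything but simply reads off $\observation_i(\f_i(\vec{s})[t])(v) = \f_i(\vec{s})[t](v)$, i.e.\ the agent's own round-one belief, so equality is immediate for such $v$; the case where $v$ is invisible in both $s_t$ and $\f_i(\vec{s})[t]$ is pushed down to $t-1$ by recursion until the base case. Your argument can be repaired along the same lines: keep only the inclusion $\observation_i(s_j) \subseteq \observation_i(s'_j)$, and add that whenever $i$ sees $v$ in a perspective state $s'_j$, the value it sees is the round-one believed value, which is constant between consecutive real sightings of $v$ (for $lt \le j \le t$ it equals $\memorization([s_0,\dots,s_j],lt,v) = s_{lt}(v)$), so a later second-round $lt$ is harmless. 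By contrast, the point you flag as the main obstacle, the forward-search branch of $\memorization$, is secondary: as the definitions are stated, whenever $lt = \max(a\timestamp(v)) \geq 0$ we have $v \in \dom(\observation_i(s_{lt})) \subseteq \dom(s_{lt})$, so the retrieval at $lt$ falls into the first branch; the genuinely missing piece is the gained-visibility case, not the retrieval branches.
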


For the example (Example~\ref{example:groupp} from Example~\ref{example:number} and Figure~\ref{fig:plan_1}) of common perspective function (Definition~\ref{def:gjp:cf}), the conclusion of the converged set can be worked out as follows:

\textbf{Iteration 1:} $\bigcup_{\vec{g}  \in \vec{S}} \ef_G(\vec{g})  \!\assign\!  \{\f_a(\seq),\f_b(\seq)\}$.
Following Definition~\ref{def:jp:function}, we have:

\begin{tabular}{r@{~}c@{~}l}
    $\f_a(\seq)$ & $\assign$ & $ [\text{f-}\text{f-}\!\none,\text{t-}\text{f-}2,\text{f-}\text{f-}2,\text{f-}\text{f-}2,\text{f-}\text{t-}2]$ \\
    $\f_b(\seq)$ & $\assign$ & $ [\text{f-}\text{f-}\!\none,\text{t-}\text{f-}\!\none,\text{f-}\text{f-}\!\none,\text{f-}\text{f-}\!\none,\text{f-}\text{t-}1]$ \\
\end{tabular}

Since the input set of sequences is different from the output, which means the set has not converged yet, further iteration is needed.

\textbf{Iteration 2:} $\bigcup_{\vec{g} \in \{\f_a(\seq),\f_b(\seq)\}} \ef_G(\vec{g}) \!\assign\! \ef_G(\f_a(\seq))\! \cup\! \ef_G(\f_b(\seq)) \!\assign\! \{\f_a(\f_a(\seq)),\f_b(\f_a(\seq)),\f_a(\f_b(\seq)),\f_b(\f_b(\seq))\}$.
Following Definition~\ref{def:jp:function} and Theorem~\ref{thm:fieqfifi}, we have:

\begin{tabular}{r@{~}c@{~}l}
    $\f_a(\f_a(\seq))$ & $\assign$ & $\f_a(\seq)$ \\
    $\f_b(\f_a(\seq))$ & $\assign$ & $ [\text{f-}\text{f-}\!\none,\text{t-}\text{f-}\!\none,\text{f-}\text{f-}\!\none,\text{f-}\text{f-}\!\none,\text{f-}\text{t-}1]$ \\
    $\f_a(\f_b(\seq))$ & $\assign$ & $ [\text{f-}\text{f-}\!\none,\text{t-}\text{f-}\!\none,\text{f-}\text{f-}\!\none,\text{f-}\text{f-}\!\none,\text{f-}\text{t-}2]$ \\
    $\f_b(\f_b(\seq))$ & $\assign$ & $\f_b(\seq)$ \\
\end{tabular}

Since the output set is $\{\f_a(\seq),\f_b(\seq),\f_a(\f_b(\seq))\}$ ($\f_b(\f_a(\seq))$ is the same as $\f_b(\seq)$), which is still different from the input set, we have not reached a fixed point, so at least one more iteration is needed.

\textbf{Iteration 3:} $\bigcup_{\vec{g}  \in \{\f_a(\seq),\f_b(\seq),\f_a(\f_b(\seq))\}}\! \ef_G(\vec{g})\!\! \!\assign\! \!\!\ef_G(\f_a(\seq)) \allowbreak \cup \ef_G(\f_b(\seq))\! \cup\! \ef_G(\f_a(\f_b(\seq)))\!\! \!\assign\! \!\!\{\f_a(\f_a(\seq)), \f_b(\f_a(\seq)),\allowbreak\f_a(\f_b(\seq)),\f_b(\f_b(\seq)),\f_a(\f_a(\f_b(\seq))),\f_b(\f_a(\f_b(\seq)))\}$.

\begin{tabular}{r@{~}c@{~}l}
    
    $\f_a(\f_a(\seq))$ & $\assign$ & $\f_a(\seq)$ \\
    $\f_b(\f_a(\seq))$ & $\assign$ & $\f_b(\seq)$ \\
    $\f_a(\f_b(\seq))$ & $\assign$ & $ [\text{f-}\text{f-}\!\none,\text{t-}\text{f-}\!\none,\text{f-}\text{f-}\!\none,\text{f-}\text{f-}\!\none,\text{f-}\text{t-}2]$ \\
    $\f_b(\f_b(\seq))$ & $\assign$ & $\f_b(\seq)$ \\
    $\f_a(\f_a(\f_b(\seq)))$ & $\assign$ &  $\f_a(\f_b(\seq))$ \\
    $\f_b(\f_a(\f_b(\seq)))$ & $\assign$ &  $[\text{f-}\text{f-}\!\none,\text{t-}\text{f-}\!\none,\text{f-}\text{f-}\!\none,\text{f-}\text{f-}\!\none,\text{f-}\text{t-}2]$ \\
\end{tabular}

Since $\f_b(\f_a(\f_b(\seq)))$ is the same as $\f_a(\f_b(\seq))$, the output set is $\{\f_a(\seq),\f_b(\seq),\f_a(\f_b(\seq))\}$, which is the same as the input.
Therefore, the common justified perspective has converged.
In addition, according to the ternary semantics of the $\cb$ operator, $\T[\vec{s},\cb_G n<3] \!\assign\! \min(\{\T[\vec{g}, n<3] \mid \vec{g} \!\in\! \cf_G(\{\seq\})\}) \!\assign\! \min(\{1,1,1\}) \!\assign\! 1$.
Therefore, the sequence of action in Plan~\ref{plan:1} can form a common belief for the group $G \!\assign\! \{a,b\}$ that $\cb_G n<3$.
\section{Theorem and Proofs}

Here, we provided the theorem of the claims in Section~\ref{sec:model} and their proofs.

\subsection{Uniform Belief}

For the uniform belief operator, $\eb$, its ternary semantics (Definition~\ref{def:gjp:eb_ternary}) derived from the uniform perspective function $\ef$ (Definition~\ref{def:gjp:ef}) satisfies \textbf{KD} axioms.
\begin{theorem}
    The ternary semantics for the uniform belief operator $\eb$ follows the axiomatic system \textbf{KD}:
    
    \noindent
    \begin{tabular}{@{~}l@{~}l}
    	\textbf{K} & (Distribution):           $\eb_G \varphi \land \eb_G(\varphi \rightarrow \psi) \rightarrow \eb_G \psi $\\[1mm]
    	\textbf{D} & (Consistency):            $\eb_G \varphi \rightarrow \neg \eb_G \neg \varphi $\\[1mm]
    \end{tabular} 
\end{theorem}

\begin{proof}
    Axiom \textbf{K} is straightforward.
    In the form of ternary semantics, $\T[\seq,\eb_G \varphi \land \eb_G(\varphi \rightarrow \psi)]=1$ means $\T[\seq,\eb_G \varphi]=1$ and $\T[\seq, \eb_G(\varphi \rightarrow \psi)]=1$ (item b in Definition~\ref{def:jp:ternary}).
    Since $\ef_G(\seq)$ is a union of everyone's justified perspectives in group $G$, we have $\forall i \in G, \T[\seq,B_i \varphi]=1$ and $\forall i \in G, \T[\seq,B_i (\varphi \rightarrow \psi)]=1$, according to Definition~\ref{def:gjp:eb_ternary}.
    Then, for every agent $i$ in group $G$, we have $\T[\f_i(\seq),\varphi]=1$ and $\T[\f_i(\seq),(\varphi \rightarrow \psi)]=1$.
    As both $\varphi$ and $(\varphi \rightarrow \psi)$ are evaluated as true in $\f_i(\seq)$, $\psi$ also holds in $\f_i(\seq)$.
    Thus, $\T[\seq,\eb_G \psi]=1$.

    For Axiom~\textbf{D}, using similar reasoning as above, $\T[\seq, \eb_G \varphi ]=1$ means $\forall i \in G, \T[\seq,B_i \varphi]=1$.
    For the right side, we have $\T[\seq,\neg \eb_G \neg \varphi]=1-\T[\seq,\eb_G \neg \varphi]$.
    Since $\forall i \in G, \T[\seq,B_i \varphi]=1$, we have $\forall i \in G, \T[\seq, B_i \neg \varphi]=0$.
    Therefore, $\T[\seq,\neg \eb_G \neg \varphi]=1$.
\end{proof}

\subsection{Distributed Belief}

For the distributed belief operator, $\db$, its ternary semantics (Definition~\ref{def:gjp:db_ternary}) derived from the distributed perspective function $\df$ (Definition~\ref{def:gjp:df}) satisfies \textbf{KD45} axioms.

\begin{theorem}
    The ternary semantics for the distributed belief operator $\db$ follows the axiomatic system \textbf{KD45}: 

    \noindent
    \begin{tabular}{@{~}l@{~}l}
    	\textbf{K} & (Distribution):           $\db_G \varphi \land \db_G(\varphi \rightarrow \psi) \rightarrow \db_G \psi $\\[1mm]
    	\textbf{D} & (Consistency):            $\db_G \varphi \rightarrow \neg \db_G \neg \varphi $\\[1mm]
    	\textbf{4} & (Positive Introspection): $\db_G \varphi \rightarrow  \db_G \db_G \varphi $\\[1mm]
    	\textbf{5} & (Negative Introspection): $\neg \db_G \varphi \rightarrow  \db_G \neg \db_G \varphi $\\[1mm]
    \end{tabular} 

\end{theorem}

Different from uniform and common beliefs, this theorem can be proved in a more elegant way by induction.
As mentioned in Section~\ref{sec:background}, \citeauthor{DBLP:conf/aips/Hu0L23}~\shortcite{DBLP:conf/aips/Hu0L23} prove that for any individual agent $i$, with an observation function $\observation_i$ that is contractive, idempotent, and monotonic, the semantics (Definition~\ref{def:jp:ternary}) with the justified perspective function (Definition~\ref{def:jp:function}) is a \textbf{KD45} semantics.
Our definition for the distributed justified perspective function (Definition~\ref{def:gjp:df}) mimics the justified perspective function except the observation is taking a union of every agent's observation from the group.
This is equivalent to having one imaginary agent $x$ whose observation function is the union of the observations of the whole group $G$ ($\observation_x(s)= \bigcup_{i\in G} \observation_i(s)$).
To show the above theorem holds, we just need to prove the observation function for agent $x$ is contractive, idempotent, and monotonic.

\begin{theorem}
\label{thm:ox}
    Given a group $G$ of agents and their observation functions (contractive, idempotent, and monotonic), the union of their observation functions $\observation_x$ ($\observation_x(s)= \bigcup_{i\in G} \observation_i(s)$) is also:
    
    \vspace{1mm}
    \begin{tabular}{@{~}l@{~}l}
    	Contractive: & $\observation_x(s) \subseteq s$\\[1mm]
    	Idempotent: & $\observation_x(s) = \observation_x(\observation_x(s))$\\[1mm]
    	Monotonic: & If $s \subseteq s'$, then $\observation_x(s) \subseteq \observation_x(s')$\\
    \end{tabular} 
\end{theorem}

\begin{proof}   
    Contractiveness and monotonicity are straightforward. 
    Since each $\observation_i(s) \subseteq s$, their union $\observation_x(s)$ is also a subset or equal to $s$.
    Then, for any states $ s \subseteq s'$, since each $\observation_i(s) \subseteq \observation_i(s')$, the union of every agent's $\observation_i(s)$ is also a subset or equal to the union of every agent's $\observation_i(s')$.

    For idempotence, the premise gives $O_i(s) \subseteq O_x(s)$.
    For each agent $i$ in the group, according to monotonicity, we have $O_i(O_i(s)) \subseteq O_i(O_x(s))$, which is $O_i(s) \subseteq O_i(O_x(s))$ (idempotence of $O_i$).
    Applying union on both sides for the group, we have $\bigcup_{i \in G} O_i(s) \subseteq \bigcup_{i \in G} O_i(O_x(s))$, which is $O_x(s) \subseteq O_x(O_x(s))$.
    From the other direction, we have $O_x(s) \subseteq s$ (Contractiveness).
    For each agent $i$ in the group, according to monotonicity, we have $ O_i(O_x(s)) \subseteq O_i(s)$.
    Applying union on both sides for the group, we have  $\bigcup_{i \in G}  O_i(O_x(s)) \subseteq \bigcup_{i \in G} O_i(s)$, which is $O_x(O_x(s)) \subseteq O_x(s)$.
    Combining together, we have $\observation_x(s) = \observation_x(\observation_x(s))$.
\end{proof}

Therefore, Theorem~\ref{thm:ox} holds, which indicates that the semantics for the distributed belief operator is a \textbf{KD45} semantics.

\subsection{Common Belief}

Firstly, we prove Theorem~\ref{thm:fixed-point} (the upper bound for converged common perspective set size) as follows:

    \begin{proof}
        Since for each variable in the last state of a justified perspective $\vec{w}$, its value is either visible (same as its in the last state of the global perspective), or not visible (same as its in the second-last state from $\vec{w}$),
        the number of possible states in each index of a justified perspective is $2^{|V|}$.
        So, the number of possible perspectives given a global state sequence $\vec{s}$ with a length of $n$ is $2^{|V|\times n}$.
        In calculating $\cf_{G}$, either the base case holds (that is, combining the perspective of the group for all $\vec{s}  \!\in\! \vec{S}$ does not change the common perspective), so it terminates and adds no new perspectives; or the recursive step holds. 
        In this case, the input of the $\cf$ function is a set that contains perspectives from each agent in the format of $\vec{S} \!\assign\! \{ \f_i(\vec{s}),\f_i(\vec{s}'),\dots \mid \forall i  \!\in\! G \}$.
        Then, we apply $\f_j$ for each agent $j$ in the group $G$ on each perspective from $\vec{S}$ as $\vec{S}'  \!\assign\!  \bigcup_{\vec{s}  \in \vec{S}} \ef_{G}(\vec{s})$.
        For each $\f_i(\vec{s})$ from $\vec{S}$, we have $\f_j(\f_i(\vec{s}))$ in $\vec{S}'$ for each agent $j$ in group $G$.
        With Theorem~\ref{thm:fieqfifi}, we have $\f_j(f_i(\vec{s}))  \!\assign\!  \f_i(\vec{s})$ when $i \!\assign\! j$.
        Therefore, we have $\vec{S} \subseteq \vec{S}'$. 
        At worst, we add one new sequence each iteration, meaning that $\cf_{G}(\vec{S})$ converges by at most $2^{|V|\times n}$ iterations.
    \end{proof}

Then, for the common belief operator, $\cb$, its ternary semantics (Definition~\ref{def:gjp:cb_ternary}) derived from the common perspective function $\cf$ (Definition~\ref{def:gjp:cf}) satisfies \textbf{KD4} axioms.

\begin{theorem}
    The ternary semantics for the common belief operator $\cb$ follows the axiomatic system \textbf{KD4}: 

    \noindent
    \begin{tabular}{@{~}l@{~}l}
    	\textbf{K} & (Distribution):           $\cb_G \varphi \!\land\!\cb_G(\varphi \!\rightarrow\!\psi) \!\rightarrow\!\cb_G \psi $\\[1mm]
    	\textbf{D} & (Consistency):            $\cb_G \varphi \!\rightarrow\!\neg \cb_G \neg \varphi $\\[1mm]
    	\textbf{4} & (Positive Introspection): $\cb_G \varphi \!\rightarrow\! \cb_G \cb_G \varphi $\\[1mm]
    \end{tabular} 
\end{theorem}

\begin{proof}
    For Axiom \textbf{K}, $\T[\seq, \cb_G \varphi \!\land\!\cb_G(\varphi \!\rightarrow\!\psi)] \!\assign\! 1$ means $\forall \vec{g}  \!\in\! \cf_G(\{\seq\}), \T[\vec{g}, \varphi] \!\assign\! 1$ and $\forall \vec{g}  \!\in\! \cf_G(\{\seq\}), \T[\vec{g}, \varphi \!\rightarrow\!\psi] \!\assign\! 1$.
    Therefore, we have $\forall \vec{g}  \!\in\! \cf_G(\{\seq\}), \T[\vec{g}, \psi] \!\assign\! 1$, which makes $\T[\seq, \cb_G \psi] \!\assign\! 1$.

    For Axiom \textbf{D},  $\T[\seq, \cb_G \varphi] \!\assign\! 1$ means  $\forall \vec{g}  \!\in\! \cf_G(\{\seq\}),$ $\T[\vec{g}, \varphi] \!\assign\! 1$.
    $\T[\seq, \neg \cb_G \neg \varphi] \!\assign\! 1-\T[\seq, \cb_G \neg \varphi]$.
    Since $\T[\seq, \cb_G \neg \varphi] \!\assign\! \min(\{ \T[\vec{g}, \neg \varphi] \mid \vec{g}  \!\in\! \cf_G(\{\seq\})\})$ and $\forall \vec{g}  \!\in\! \cf_G(\{\seq\}), \T[\vec{g}, \neg \varphi] \!\assign\! 0$, we have $\T[\seq, \cb_G \neg \varphi] \!\assign\! 0$, which means  $\T[\seq, \neg \cb_G \neg \varphi] \!\assign\! 1$.

     For Axiom \textbf{4}, the premise $\T[\seq, \cb_G  \varphi] \!\assign\! 1$ means $\forall \vec{g}  \!\in\! \cf_G(\{\seq\}), \T[\vec{g}, \varphi] \!\assign\! 1$.
     According to the definition of function $\cf$, since $ \{\vec{g}\} \subseteq \cf_G(\{\seq\})$, we have $\bigcup_{\vec{t} \in \{\vec{g}\}}\ef_G(\vec{t}) \subseteq  \cf_G(\{\seq\})$.
     In addition, apply any times of $\ef_G$ on this perspective set (etc. $\ef_G\big(\bigcup_{\vec{t} \in \{\vec{g}\}}\ef_G(\vec{t})\big)$ for 2 times), the result must be a subset or equal of $\cf_G(\{\seq\})$.
     This means $\forall \vec{g}  \!\in\! \cf_G(\{\seq\}), \cf_G(\{\vec{g}\}) \subseteq  \cf_G(\{\seq\})$.
     Since $\forall \vec{g}  \!\in\! \cf_G(\{\seq\}), \T[\vec{g}, \varphi] \!\assign\! 1$, we have $\forall \vec{g}  \!\in\!  \cf_G(\{\seq\}), \forall \vec{g'}  \!\in\! \cf_G(\{\vec{g}\}),  \T[\vec{g'}, \varphi] \!\assign\! 1$.
     Thus, Axiom \textbf{4} holds.
\end{proof}

\section{Supplementary Material: Semantics}
Signature (Definition~\ref{def:pwp:signature}), Language (Definition~\ref{def:gjp:language}), and Model (Definition~\ref{def:jp:model}) are defined in the main content.

In order to provide semantics for $CS$, we need to re-introduce the perspective (observation) function $\cc\observation$.

\begin{definition}[Common Observation Function~\cite{DBLP:journals/jair/Hu0L22}]
Given a group of agents $G$ and the current state $s$, the common observation of the group can be defined as:
\[
    \cc\observation(G,s) = 
    \begin{cases}
        s & \text{if } s = \bigcap _{i \in G} \observation_i(s)\\
        \cc\observation(G,\bigcap _{i \in G} \observation_i(s)) & \text{otherwise.}
    \end{cases}
\]
\end{definition}
The variables that are not visible to any agent in the group $G$ are filtered out until the remaining set becomes a fixed point set.
That is, every variable in the set is commonly seen by the group $G$.

\subsection{Ternary Semantics}
Here we provide full ternary semantics.
Similar to the complete semantics:
item a-g~\cite{DBLP:conf/aips/Hu0L23} are for individual modal operators using justified perspective function;
item h-o~\cite{DBLP:journals/jair/Hu0L22} are for group seeing and knowledge operators using group perspective functions;
item p-r are for group belief operators defined in Section~3.2 from the main paper.

\begin{definition}[Ternary Semantics]
\label{def:jpm:ternary}
The ternary semantics are defined using function $\T$, omitting the model $M$ for readability:

\vspace{2mm} 
\noindent
\begin{supertabular}{@{~~}l@{~~}l@{~~}l@{~~}l@{~~}l}
    (a) & $\T[\seq, r(V_r)]$ & $=$ & 1 & if $\pi(\seq[n], r(V_r)) = true$;\\
        &                          &     & 0 & else if $\pi(\seq[n], r(V_r)) = false$;\\
        &                          &     & $\unknown$ & otherwise\\[1mm] 
    (b) & $\T[\seq, \phi \land \psi]$ & $=$ &\multicolumn{2}{l}{ $\min(\T[\seq, \phi], \T[\seq, \psi])$}\\[1mm]
    (c) & $\T[\seq, \neg \varphi]$    & $=$ & \multicolumn{2}{l}{ $1 - \T[\seq, \varphi]$}\\[1mm]
    (d) & $\T[\seq, S_i v]$           & $=$ & $\unknown$ & if $ i \notin \seq[n]$ or $ v \notin  \seq[n]$\\[1mm]
        &                          &   & $0$ & if $v \notin \observation_i(\seq[n])$\\
        &                          &   & $1$ & otherwise     \\
    (e) & $\T[\seq, S_i \varphi]$  & $=$ & $\unknown$ & if $\T[\seq,\varphi] = \unknown$ or $ i \notin \seq[n]$;\\
    &                          &   & $0$ & if $\T[ \observation_i(\seq), \varphi] = \unknown$;\\
    &                          &   & $1$ & otherwise\\[1mm]
    (f) & $\T[\seq, K_i \varphi]$ & = & \multicolumn{2}{l}{  $ \T[\seq, \varphi \land S_i\varphi]$}\\[1mm]
    (g) & $\T[\seq, B_i \varphi]$ & = & \multicolumn{2}{l}{ $ \T[\f_i(s_{\none} \override{\seq}), \varphi]$ }\\[1mm]
    
    (h) & $\T[\vec{s}, \es_G \alpha]$   & $=$ & \multicolumn{2}{l}{ $\min(\{ \T[\seq, S_i \alpha] \mid i\in G\})$}\\[1mm]
    
    (i) & $\T[\seq, \ek_i \varphi]$ & = & \multicolumn{2}{l}{  $ \T[\seq, \varphi \land \es_i\varphi]$}\\[1mm]
    (j) & $\T[\vec{s}, \ds_G v]$  & $=$ & $\unknown$ & if $v \not\subseteq \seq[n]$\\
        &                      &   & & or $ \forall i \in G,\ i \notin \seq[n]$;\\
	&                      &   & 0 & if $v \notin \bigcup_{i\in G}\oldobservation_i(\seq[n])$;\\
        &                      &   & 1 & otherwise 	\\[1mm]
	(k) & $\T[\vec{s}, \ds_G \varphi]$ & $=$ & $\unknown$ & if $\T[\seq,\varphi] = \T[\seq,\neg\varphi] = \unknown$\\
        &                      &   & & or $ \forall i \in G,\ i \notin \seq[n]$\\
   	&                      &   & 0 & if $\T[\overrightarrow{d\oldobservation}_G(\seq), \varphi] = \unknown$, \\
        &                      &   &   & and $\T[\overrightarrow{d\oldobservation}_G(\seq), \neg \varphi] = \unknown$, \\
	  &                      &   & 1 & otherwise\\[1mm] 

    (l) & $\T[\seq, \dk_i \varphi]$ & = & \multicolumn{2}{l}{  $ \T[\seq, \varphi \land \ds_i\varphi]$}\\[1mm]
    (m) & $\T[\vec{s}, \cs_G v]$  & $=$ & $\unknown$ & if $v \not\subseteq \seq[n]$\\
        &                      &   & & or $ \exists i \in G,\ i \notin \seq[n]$;\\
	&                      &   & 0 & if $v \notin \cc\oldobservation(G,\seq[n])$;\\
        &                      &   & 1 & otherwise \\[1mm]
	
    (n) & $\T[\vec{s}, \cs_G \varphi]$  & $=$ & $\unknown$ & if $\T[s,\varphi] = \T[s,\neg\varphi] = \unknown$\\
        &                      &   & & or $ \exists i \in G,\ i \notin s$;\\
	&                      &   & 0 & if $\T[\overrightarrow{ \cc\oldobservation}(G,\seq), \varphi] = \unknown$,\\
        &                      &   &  & and $\T[\overrightarrow{ \cc\oldobservation}(G,\seq), \neg\varphi] = \unknown$\\
        &                      &   & 1 & otherwise \\[1mm]
    (o) & $\T[\seq, \ck_i \varphi]$ & = & \multicolumn{2}{l}{  $ \T[\seq, \varphi \land \cs_i\varphi]$}\\[1mm]
    (p) & $\T[\vec{s}, \eb_{G} \varphi]$   & $=$ & \multicolumn{2}{l}{  min($\{\T[ \vec{g}, \varphi] \mid \forall \vec{g} \in EF$}\\
        &                                  &     & \multicolumn{2}{l}{where $EF = \ef_{G}(s_{\none} \override{\seq}) \}) $} \\[1mm]
    (q) & $\T[\vec{s}, \db_{G} \varphi]$   & $=$ & \multicolumn{2}{l}{  $\T[ \df_{G}(s_{\none} \override{\seq}), \varphi]$}\\[1mm]
    (r) & $\T[\vec{s}, \cb_{G} \varphi]$   & $=$ & \multicolumn{2}{l}{ min($\{\T[ \vec{g}, \varphi] \mid \forall \vec{g} \in CF \})$,}\\
        &                                  &     & \multicolumn{2}{l}{where $CF = \cf_{G}(\{s_{\none} \override{\seq}\}) $} \\[1mm]

\end{supertabular}

\vspace{2mm} 
\noindent
where: 
$\alpha$ is a variable $v$ or a formula$\varphi$; 
$\seq[n]$ is the final state in sequence $\seq$; 
$\overrightarrow{d\oldobservation}_G(\seq) = [\bigcup_{i\in G}\oldobservation_i(\seq[0]),\dots, \bigcup_{i\in G}\oldobservation_i(\seq[n])]$; 
and, $\overrightarrow{ \cc\oldobservation}(G,\seq)=[\cc\oldobservation(G,\seq[0]),\dots,\cc\oldobservation(G,\seq[n])]$.
\end{definition}

\subsection{Complete Semantics}

Here, we provide the full complete semantics, which includes: 
semantics for the individual modal logic operators using justified perspective functions (item a-g)~\cite{DBLP:conf/aips/Hu0L23};
semantics for the group seeing and knowledge operators using group perspective functions (item h-l)~\cite{DBLP:journals/jair/Hu0L22};
And, the semantics for group belief operators is similar as we provided in Section~3.2.

\begin{definition}[Complete semantics]
\label{def:jpm:complete}
The complete semantics for justified perspective are defined as:

\vspace{2mm}
\noindent
\begin{supertabular}{@{~~}l@{~~}l@{~~}l@{~~}l}
 (a) & $(M,\seq) \vDash r(V_r)$      & iff & $\pi(\seq[n], r(V_r)) = true$\\[1mm]
 (b) & $(M,\seq) \vDash \phi \land \psi$ & iff & $(M,\seq) \vDash \phi$ and $(M,\seq) \vDash \psi$\\[1mm]
 (c) & $(M,\seq) \vDash \neg \varphi$    & iff & $(M,\seq) \not \vDash \varphi$\\[1mm]
 (d) & $(M,\seq) \vDash S_i v$           & iff & $v \in \observation_i(\seq[n])$\\[1mm]
 (e) & $(M,\seq) \vDash S_i \varphi$     & iff & $\forall \vec{g} \in\overrightarrow{\statespacecomplete}^{n+1}$, \\
     &                                   & & $(M,  \vec{g}\override{\observation_i(\seq)} ) \vDash \varphi$ \\
     &                                   &  or & $\forall \vec{g} \in \overrightarrow{\statespacecomplete}^{n+1}$, \\
     &                                   & & $(M,  \vec{g} \override{\observation_i(\seq)} ) \vDash \neg\varphi$\\[1mm]
 (f) & $(M,\seq) \vDash K_i\varphi$      & iff & $(M, \seq ) \vDash \varphi \land S_i \varphi$\\[1mm]
 (g) & $(M,\seq) \vDash B_i \varphi$     & iff & $\forall \vec{g} \in\overrightarrow{\statespacecomplete}^{n+1}$, \\
     &                                   & & $(M,  \vec{g}[\f_i(\seq)]) \vDash \varphi$  \\[1mm]
 (h) & $(M,\seq) \vDash \es_G\alpha$     & iff & $\forall i\in G$, $(M,\vec{s}) \vDash S_i\alpha$\\[1mm]
 (i) & $(M,s) \vDash \ek_G \varphi$      & iff & $(M,s) \vDash (\varphi \land \es_G \varphi)$\\[1mm]
 (j) & $(M,\seq) \vDash \ds_G v$         & iff & $v \in \bigcup_{i\in G}\oldobservation_i(\seq[n])$\\[1mm]
 (k) & $(M,\seq) \vDash \ds_G \varphi$    & iff & $\forall \vec{g} \in\overrightarrow{\statespacecomplete}^{n+1}$, \\
     &                                   & & $(M,\vec{g}\langle \overrightarrow{dO_G}(\seq) \rangle) \vDash \varphi$\\ 
     &                                   & & or $\forall \vec{g} \in\overrightarrow{\statespacecomplete}^{n+1}$, \\
     &                                   & & $(M,\vec{g}\langle \overrightarrow{dO_G}(\seq) \rangle) \vDash \neg\varphi$,\\[1mm]
 (l) & $(M,s) \vDash \dk_G \varphi$   & iff & $(M,s) \vDash (\varphi \land \ds_G \varphi)$\\[1mm]
 (m) & $(M,\seq) \vDash \cs_G v$       & iff & $v \in \cc\oldobservation(G,\seq[n])$\\[1mm]
 (n) & $(M,\seq) \vDash \cs_G \varphi$ & iff & $\forall \vec{g} \in\overrightarrow{\statespacecomplete}^{n+1}$,  \\
    &                                   & & $(M,\vec{g} \langle \overrightarrow{ \cc\oldobservation}(G,\seq) \rangle ) \vDash \varphi$\\
    &                                   & & or $\forall \vec{g} \in\overrightarrow{\statespacecomplete}^{n+1}$, \\
    &                                   & & $(M,\vec{g} \langle \overrightarrow{ \cc\oldobservation}(G,\seq) \rangle ) \vDash \neg\varphi$\\[1mm]
 (o) & $(M,s) \vDash \ck_G \varphi$   & iff & $(M,s) \vDash (\varphi \land \es_G \varphi)$\\[1mm]
 (p) & $(M,\seq) \vDash \eb_{G} \varphi$   & iff & $ \forall \vec{w} \in \ef_G(\vec{s})$, \\
     &                                          &     & $ \forall \vec{g} \in\overrightarrow{\statespacecomplete}^{n+1} , (M, \vec{g}[\vec{w}]) \vDash \varphi$\\[1mm]
 (q) & $(M,\seq) \vDash \db_{G} \varphi$   & iff & $ \forall \vec{g} \in\overrightarrow{\statespacecomplete}^{n+1}, (M, \vec{g}[\df_{G}(\vec{s})]) \vDash \varphi$\\[1mm] 
 (r) & $(M,\seq) \vDash \cb_{G} \varphi$   & iff & $ \forall \vec{w} \in \cf_{G}(\{\vec{s}\})$, \\
     &                                          &     &   $ \forall \vec{g} \in\overrightarrow{\statespacecomplete}^{n+1} , (M, \vec{g}[\vec{w}]) \vDash \varphi$\\[1mm]
\end{supertabular}
\vspace{2mm}

\noindent where: 
$\alpha$ is a variable $v$ or a formula $\varphi$; 
$\seq[n]$ is the final state in sequence $\seq$; 
$ \overrightarrow{\statespacecomplete}^{n+1}$ is the sequence space with length of $n+1$ (same length as $\seq$);
$\vec{g}\langle \seq \rangle = [\vec{g}[0] \langle \seq[0] \rangle ,\dots,\vec{g}[n]\langle \seq[n] \rangle]$;
$\overrightarrow{d\oldobservation}_G(\seq) =$ $  [\bigcup_{i\in G}\oldobservation_i(\seq[0]), \dots, \bigcup_{i\in G}\oldobservation_i(\seq[n])]$; 
and, $\overrightarrow{ \cc\oldobservation}(G,\seq)=[\cc\oldobservation(G,\seq[0]),\dots,\cc\oldobservation(G,\seq[n])]$.
\end{definition}

In the complete semantics, the input of the perspective function does not need to be filled in with the none values to make it a complete-state sequence.
The reason is that even in evaluating seeing relation, the complete semantics overrides the potential partial-state sequence $\vec{O}_i(\seq)$ with the possible complete-state sequence $\vec{g} \in \overrightarrow{\statespacecomplete}^{n+1}$.
That is, all sequences in evaluation are complete-state sequences.

\bibliography{aaai2026}


\end{document}